\documentclass[11pt]{article}
\usepackage[letterpaper, total={6in, 8in}]{geometry}

\usepackage{amsmath}
\usepackage{mathtools, amssymb, amscd, amsthm, amsfonts}
\usepackage{nccmath}

\usepackage{graphicx}
\usepackage{adjustbox}
\usepackage{hyperref}
\usepackage[capitalise]{cleveref}

\usepackage{calc} 

\usepackage{multirow} 
\usepackage{setspace} 
\usepackage{algorithm, algorithmic} 

\usepackage{natbib}
\bibliographystyle{abbrvnat}

\newcommand{\codeindent}{\hphantom{~~~~~}}

\usepackage{dashbox}
\newcommand\dashedph[1][H]{\setlength{\fboxsep}{0pt}\setlength{\dashlength}{1.5pt}\setlength{\dashdash}{0.5pt} \dbox{\phantom{#1}}}


\usepackage[dvipsnames]{xcolor}

\usepackage{mdframed}


\DeclareMathOperator*{\argmin}{\arg\min}

\newtheorem{theorem}{Theorem}[section]

\newtheorem{defn}{Definition}[section]
\newtheorem{cor}{Corollary}[section]
\newtheorem{prop}{Proposition}[section]
\newtheorem{assumption}{Assumption}[section]
\newtheorem*{remark}{Remark}

\title{Decentralized Multi-Agent Reinforcement Learning for Continuous-Space Stochastic Games\thanks{This manuscript is the expanded version of an article appearing at the American Control Conference, 2023.} 
}
\author{
Awni Altabaa\thanks{\small{Department of Statistics and Data Science at Yale University, New Haven, CT, United States:{\tt awni.altabaa@yale.edu}}} \and
Bora Yongacoglu\thanks{Department of Mathematics and Statistics at Queen's University, Kingston ON, Canada: {\tt\small 1bmy@queensu.ca, resp. yuksel@queensu.ca}} \and
and Serdar Y\"uksel\footnotemark[3]
}
\date{}

\begin{document}

\maketitle

\begin{abstract}
Stochastic games are a popular framework for studying multi-agent reinforcement learning (MARL). Recent advances in MARL have focused primarily on games with finitely many states. In this work, we study multi-agent learning in stochastic games with general state spaces and an information structure in which agents do not observe each other's actions. In this context, we propose a decentralized MARL algorithm and we prove the near-optimality of its policy updates. Furthermore, we study the global policy-updating dynamics for a general class of best-reply based algorithms and derive a closed-form characterization of convergence probabilities over the joint policy space. 
\end{abstract}


\section{Introduction}

Multi-agent reinforcement learning (MARL) is the study of the learning dynamics of strategic agents that coexist in a shared environment, and is one of the important frontiers of machine learning and control. In this paper, we study MARL in \emph{stochastic games}, also known as Markov games, a multi-agent generalization of Markov decision problems (MDPs) in which the cost-relevant history of the system is summarized by a state variable \cite{shapleyStochasticGames1953}. Due to its ability to model both dynamic inter-temporal choice as well as strategic interaction, the stochastic games model has long been a popular framework for studying multi-agent learning \cite{littmanMarkovGames1994}.

In comparison to single-agent reinforcement learning, analysis of MARL is difficult due to several challenges inherent to multi-agent systems, including non-stationarity, conflicting interests, and decentralized information. As a result, fundamental understanding of multi-agent reinforcement learning theory has lagged behind its single-agent counterpart \cite{zhangMultiagentReinforcement2021}. Owing partly to a variety of impressive empirical successes, such as those of \cite{silver2016mastering} and \cite{vinyalsAlphastarMastering2019}, the subject of learning in stochastic games has recently attracted considerable attention, and theoretical advances have been made along several research axes. In \cref{sec:related_work}, we survey some of these advances in greater detail, and we observe that the bulk of theoretical progress in MARL has focused on games with finitely many states. By contrast, considerably less is known about learning in games with more general state spaces, which model various real-world settings that do not admit finite state space models.

In this paper, we study the problem of decentralized multi-agent reinforcement learning in stochastic games with standard Borel state spaces, which includes finite sets, countably infinite sets, and finite dimensional vector spaces as special cases. We are interested in the setting in which a generative model is not available for sampling state and (joint) action feedback and in which there is no offline data set available to the players. Instead, we focus on a setting in which players must learn using in-game feedback, updating their estimates and adjusting their behaviour only as new information arises from sequential interaction with the environment.  We approach the problem with an eye on decentralized settings, in which agents may not be able to observe the actions of other agents, but must nevertheless attempt to respond optimally to the decision rules of others. In this context, we wish to propose an algorithm that is justifiable from an individual rationality point of view, but also comes with some long term guarantee of system-wide stability---with policies converging to some joint policy under self-play.

With these objectives in mind, we present a decentralized MARL algorithm for stochastic games with standard Borel state spaces, and we give a formal analysis of its properties. The algorithm proposed here extends the \emph{Decentralized Q-learning} algorithm of \cite{arslanDecentralizedQlearning2017} to settings with general state spaces by quantizing the state variable. Although this approach is natural, one must implement it with care: by quantizing the state space, the algorithm designer artificially restricts the class of policies and may unintentionally limit the efficacy of its learning agents. We rigorously analyze such a state space quantization, and we provide sufficient conditions on the game that ensure such an approach entails an arbitrarily small loss in performance. To accomplish this analysis, we use recent theoretical advances in single-agent MDPs with general state spaces, building most prominently upon the quantized Q-learning results of \cite{karaQlearningMDPs2021}.

The algorithm presented here relies on the exploration phase technique, which requires that agents change their policies only at prescribed times, $t_0 < t_1 < \dots $, while holding their policies fixed during the intervals $[t_k, t_{k+1} - 1]$, for $k \geq 0$. (That is, players do not change policies between update times.) The update times are predetermined and common to all agents, but outside of this no special coordination structure is assumed. In particular, all agents may change their policies simultaneously at an update time $t_k$ or may choose to not change their policy. Besides this assumption on synchrony, agents need not communicate, observe one another's actions or cost realizations, or even be aware of the other agents or the fact that they are playing a game. Despite this decentralization, we show that policy updates of our algorithm are asymptotically optimal for each agent with respect to the most recently observed environment. 

In addition to the near optimality of policy updates resulting from our algorithm, we also analyze the joint policy dynamics for a class of best-reply-based updates. For such algorithms, we study convergence to equilibrium and give a characterization of the convergence behaviour when the game admits weakly acyclic structure. Finally, we describe additional settings in which self-play of our algorithm drives the joint policy process to $\epsilon$-equilibrium policies.


\subsection*{Contributions:}

\begin{itemize}

    \item In \cref{alg:cts_dec_qlearning}, we propose a decentralized MARL algorithm for environments with general state spaces. In \cref{thm:cts_transition_near_br}, we prove that this algorithm yields policy updates which are near-optimal with respect to each agent's observed environment, under the assumptions of weak continuity of the transition kernel and continuity of the cost function.

    \item In \cref{prop:abs_probs}, we theoretically analyze the global joint policy-updating dynamics of a general class of best-reply-based algorithms and study convergence to equilibria.




    \item In \cref{sec:simulation_study}, we present a simulation study which corroborates our theoretical study. 
    
\end{itemize}

\subsection*{Organization} The remainder of the paper is structured as follows. In \Cref{sec:related_work} we discuss some related work on multi-agent reinforcement learning algorithms with theoretical guarantees. In \Cref{sec:background} we give the relevant background on modeling multi-agent environments. In \Cref{sec:cts_dec_alg}, we propose a continuous-space extension of the decentralized multi-agent Q-learning algorithm and present our accompanying theoretical results. In \Cref{sec:policy_updating_dynamics} we derive a complete characterization of the global policy-updating dynamics of our algorithms. In \cref{sec:convergence_to_eq}, we discuss the topic of convergence to equilibrium and suggest lines of future research.  A simulation study of our algorithm is presented in \Cref{sec:simulation_study}. 

\ 

\noindent \textbf{Notation.} We use $\mathbb{P}$ and $\mathbb{E}$ to denote the probability measure or expectation with subscripts and superscripts to denote the underlying probability space. $\mathcal{P}(S)$ denotes the set of probability measures on $S$ and $\mathcal{P}(S' | S)$ denotes the set of transition kernels from $S$ to $S'$. 
We will tend to use $\hat{\dashedph}$ to denote quantities associated with approximation due to quantization.

\section{Related Work} \label{sec:related_work}

%
%
%
%
%
%

Multi-agent learning is an interdisciplinary subject with contributions from various fields. In recent years, there has been a massive amount of new contributions to the subject, including contributions that are theoretical, empirical, and applied in nature. As such, a comprehensive overview of related literature is beyond the scope of this paper. Therefore, the discussion below takes a narrower view and attempts to focus on works that are closer to the subject matter of this study. For recent surveys of multi-agent reinforcement learning, see \cite{zhangMultiagentReinforcement2021} or \cite{ozdaglar2021independent}. For references to older research on multi-agent learning from other perspectives, \cite{cesa2006prediction,fudenberg1998theory} and \cite{young2004strategic}.

In this paper, we adopt stochastic games as our formal model for multi-agent reinforcement learning. Stochastic games have long been a popular framework for studying MARL due to their ability to model both inter-temporal consequences of actions as well as multi-player strategic considerations \cite{littmanMarkovGames1994}. The $N$-player stochastic game model generalizes both single-agent Markov decision problems (MDPs) as well as single-state repeated games.



In this paper, we study stochastic games with full state observability at each agent and no action-sharing between agents. This paradigm, in which agents have access only to the state variable and their own history of actions and cost realizations, is sometimes called the \emph{independent learning} paradigm, though the terminology is not uniform \cite{claus1998dynamics, laurent2011world, ozdaglar2021independent}. Under various names, independent learners have been the subject of considerable research recently, \cite{sayin2021decentralized,daskalakis2020independent,yongacogluSatisficingPaths2021,yongacogluDecentralizedLearning2022}, and important theoretical advances have been made.

We contrast the so-called independent learning paradigm studied here with the \emph{joint action learner} paradigm, in which each agent observes the actions of all other agents. Under an assumption of perfect monitoring of one another's actions, many theoretical contributions were made for joint action learners. We selectively cite \cite{littmanMarkovGames1994,littmanFriendFoe2001} and \cite{huNashQLearning} as a representative sampling of rigorous work on joint action learners in stochastic games.

Theoretical results for independent learners are comparatively more difficult to obtain than their counterparts for joint action learners, as salient information is hidden from agents. However, the perfect monitoring assumption can be too stringent for many (even mildly) decentralized settings. Moreover, the number of joint actions scales exponentially in the number of agents, meaning that joint action learning algorithms are often intractable.







While the preceding works study stochastic games with finite state spaces, this paper is concerned with a more general class of games that allows for standard Borel state spaces, which include finite state spaces and finite dimensional real vector spaces as special cases.

This work is in the tradition of \emph{regret testing}, pioneered by \cite{foster2006regret} for stateless repeated games and extended to multi-state stochastic games in \cite{arslanDecentralizedQlearning2017}. Other notable contributions in the regret testing tradition include \cite{germano2007global} and \cite{marden2009payoff} for stateless repeated games and \cite{yongacoglu2021decentralized} and \cite{yongacogluDecentralizedLearning2022} for multi-state stochastic games. 

Here, we discuss how one can modify the decentralized Q-learning algorithm of \cite{arslanDecentralizedQlearning2017} to accommodate for a general state space. To accomplish this generalization, we use tools from single-agent reinforcement learning theory. In particular, we build on the work of \cite{saldiAsymptoticOptimality2017}, which studies quantization of state and action spaces in MDPs, and \cite{karaQlearningMDPs2021} (building on \cite{karaConvergenceFinite2021}), which studies Q-learning with quantization in MDPs with general state spaces.

\section{Background on Stochastic Games}
\label{sec:background}


In this paper, we model our multi-agent environment as a \textit{stochastic game}. A (discounted) stochastic game is a discrete-time controlled Markov process which models systems in which multiple agents interact in an environment, each agent pursuing its own objective. 

A stochastic game is described by a tuple 
\[
(\mathcal{N}, \mathbb{X}, \{\mathbb{U}^i\}_{i \in \mathcal{N}}, \mathcal{T}, \{ c^i \}_{i \in \mathcal{N}}, \{ \beta^i \}_{i \in \mathcal{N}}),
\]
where $\mathcal{N} = \{ 1, .., N\}$ is the set of $N$ agents, $\mathbb{X}$ is a standard Borel state space, and for each $i \in \mathcal{N}$, $\mathbb{U}^i$ is agent $i$'s (standard Borel) action space. We denote the joint action space by $\boldsymbol{U} \coloneqq \mathbb{U}^1 \times \cdots \times \mathbb{U}^N$. State transitions are governed by $\mathcal{T}$, a stochastic kernel on $\mathbb{X}$ given  $\mathbb{X} \times \boldsymbol{U}$. Player $i$'s cost is determined by a stage-wise cost function $c^i: \mathbb{X} \times \boldsymbol{U} \to \mathbb{R}$, and player $i$ uses a discount factor $\beta^i \in [0,1)$ to aggregate its stream of stage costs.

\vspace{5pt}

At time $t=0$, the system begins at state $x_0$. At each time $t \geq 0$, each agent $i$ recalls the local history variable, to be described shortly, and selects an action $u_t^i \in \mathbb{U}^i$. After each agent has taken an action, agent $i$ incurs a cost $c^i_t = c^i(x_t, \boldsymbol{u}_t)$, where $\boldsymbol{u}_t = (u^1_t, ..., u^N_t)$. The state randomly evolves according to the transition kernel as $x_{t+1} \sim \mathcal{T} ( \cdot | x_t, \boldsymbol{u}_t )$.


\vspace{5pt}

In the independent learning paradigm, at each time step, agents may observe the global state variable, their own actions, and their own cost realization. The history of these observations may be used by each agent to choose their next action. We call this the the history/information variable, and denote player $i$'s information variable at time $t$ by $I^i_t = ( x_0, u^i_0, c^i_0, \dots, x_{t-1}, u^i_{t-1},c^i_{t-1}, x_t )$. The information variable $I^i_t$ takes values in the set $\mathbb{H}^i_t := \left( \mathbb{X} \times \mathbb{U}^i \times \mathbb{R} \right)^t \times \mathbb{X}$. Note that agents do not observe the actions of other agents. 

\vspace{5pt} 

An \textit{policy} for agent $i$ is a sequence of stochastic kernels $\pi^i = (\pi^i)_{t \geq 0}$ where $\pi^i \in \mathcal{P}(\mathbb{U}^i | \mathbb{H}^i_t)$. When following a policy $\pi^i$, player $i$ selects its action $u^i_t \sim \pi^i_t ( \cdot | I^i_t)$. A policy is said to be \textit{stationary} if its action selection depends only on $x_t$.\footnote{Formally, $\pi^i$ is stationary if for any $t, k \geq 0$ and information variables $I^i_t \in \mathbb{H}^i_t$, $h^i_k \in \mathbb{H}^i_k$, we have that $\pi^i_t ( \cdot | I^i_t ) = \pi^i_k (\cdot | h^i_k )$ whenever $I^i_t$ and $h^i_t$ agree in their final component.} A policy is said to be \textit{deterministic} if the action is a deterministic function of the information variable. 
When the action sets $\{ \mathbb{U}^i \}_i$ are taken to be compact, we define the $\rho$-perturbation of a policy $\pi^i$ to be a randomized policy $\pi^i_\rho  =  (1-\rho) \pi^i + \rho \text{Uniform}(\mathbb{U}^i)$, where $\text{Uniform}(\mathbb{U}^i)$ is the uniform distribution on the action set $\mathbb{U}^i$.

The objective of each agent is to minimize their own expected discounted cumulative cost, which depends on both their policy and the other agents' policies. The joint policy used by the agents  $\boldsymbol{\pi}$ induces a probability measure on the trajectories of $(x_t, \boldsymbol{u}_t)_{t\geq0}$, with associated expectation $\mathbb{E}^{\boldsymbol{\pi}}$. Then the expected discounted cost of agent $i$ is given by
\begin{equation}
    J_{x}^{i}(\boldsymbol{\pi}) \coloneqq \mathbb{E}_{x}^{\boldsymbol{\pi}} \left[ \sum_{t=0}^{\infty}{(\beta^i)^t c^i(X_t, \boldsymbol{U_t})} \right].
    \label{eq:e_cost}
\end{equation}

Each agent's policy can be viewed as a response to the environment they see, which includes all other agents' policies. Let $\boldsymbol{\pi}$ be the joint policy of all agents, then denote agent $i$'s policy by $\pi^i$ and the other agents' policies by $\boldsymbol{\pi}^{-i}$. 

\begin{defn}
    \label{defn: best_reply}
    Let $\epsilon \geq 0$ and let $\Gamma^i$ be a subset of player $i$'s policies. A policy $\pi^{*i} \in \Gamma^i$ is an $\epsilon$-\textit{best-reply} to $\boldsymbol{\pi}^{-i}$ in $\Gamma^i$ if
    \begin{equation}
        J_{x}^{i}(\pi^{*i}, \boldsymbol{\pi}^{-i}) = \inf_{\pi^i \in \Gamma^i}{J_{x}^{i}(\pi^i, \boldsymbol{\pi}^{-i})} + \epsilon, \ \forall x \in \mathbb{X}.
        \label{eq:br}
    \end{equation}
    Furthermore, a 0-best-reply $\pi^{*i}$ to $\boldsymbol{\pi}^{-i}$ is called a \textit{strict best-reply} to $(\pi^{i}, \boldsymbol{\pi}^{-i})$ if $J_{x}^{i}(\pi^{*i}, \boldsymbol{\pi}^{-i}) < {J_{x}^{i}(\pi^i, \boldsymbol{\pi}^{-i})}, \quad \text{for some } x \in \mathbb{X}$
\end{defn}

\begin{defn}
    \label{defn:equilibrium}
    For $\epsilon \geq 0$, a policy $\boldsymbol{\pi}^* \in \boldsymbol{\Gamma}$ is an \textit{$\epsilon$-equilibrium} in $\boldsymbol{\Gamma}$ if $\pi^{*i}$ is an $\epsilon$-best-reply to $\boldsymbol{\pi}^{*-i}$ for all $i = 1, ..., N$
\end{defn}

When $\epsilon = 0$, a policy is simply called a best-reply instead of a 0-best-reply, and 0-equilibrium are simply called equilibrium.

\section{Extending the Decentralized Q-Learning Algorithm to the Continuous-Spac Setting}
\label{sec:cts_dec_alg}

In this section, we propose a decentralized multi-agent reinforcement learning algorithm for stochastic games with continuous spaces and provide our theoretical guarantees. We start by giving the relevant background on quantization of state and action spaces, and the quantized Q-learning algorithm for single-agent MDPs.

\subsection{Quantization of State and Action Spaces}
\label{ssec:quantization_state_action}

When the state space or the action space(s) in an MDP or a stochastic game are continuous, we may be interested in quantizing those spaces in order to run a Q-learning algorithm. At a high-level, we group similar states into a finite set of representative bins and learn a value function on these bins from which we can infer a nearly-optimal policy.

First, consider the quantization of the action space $\mathbb{U}$. Building on \cite{saldiAsymptoticOptimality2017}, \cite[Lemma 2.1]{karaQlearningMDPs2021} shows that any MDP with a weakly continuous transition kernel can be approximated by MDPs with finite action spaces. In view of this, we assume in this paper that the action spaces are finite, perhaps as a result of such a quantization. 

We focus on the quantization of the state space. Building on \cite{saldiAsymptoticOptimality2017}, we choose a partition of the state space $\mathbb{X}$ into $M$ disjoint sets $\{B_i\}_{i=1}^M$, such that $\cup_{i} B_i = \mathbb{X}$, and $B_i \cap B_j = \emptyset$ for $i \neq j$. In each $B_i$, we choose (any) representative state $y_i \in B_i$, and denote the quantized finite state space by $\mathbb{Y} = \{ y_1, ..., y_M \}$. We define the quantization mapping $q: \mathbb{X} \to \mathbb{Y}$ by $q(x) = y_i$ if $x \in B_i$.

We now define the \textit{finite approximation MDP}. Let $\mu^* \in \mathcal{P}(\mathbb{X})$ be any weighting measure such that $\mu^*(B_i) > 0$ for all $B_i$. Now define a normalized measure on each $B_i$ by $\hat{\mu}_{y_i}^*(A) = \frac{\mu^*(A)}{\mu^*(B_i)}$ for each $A \subset B_i$. The stage-wise cost and transition kernel on $\mathbb{Y}$ are defined as an average of those quantities in the original MDP over $B_i$ weighted by $\hat{\mu}_{y_i}^*$:
\begin{equation}
\label{eq:finite_mdp}
\begin{split}
    C^*(y_i, u) &= \int_{B_i}{c(x,u) \hat{\mu}_{y_i}^*(dx)} \\
    P^*(y_j | y_i, u) &= \int_{B_i}{\mathcal{T}(B_j | x, u) \hat{\mu}_{y_i}^*(dx)}
\end{split}
\end{equation}

We denote the optimal value function of this finite MDP by $\hat{J} : \mathbb{Y} \to \mathbb{R}$, and the optimal Q-function by $\hat{Q}^*$. This can be extended to the original state space $\mathbb{X}$ by making it constant over the quantization bins $B_i$. Slightly generalizing \cite{saldiAsymptoticOptimality2017}, \cite[Theorem 2.7]{karaQlearningMDPs2021} relates the finite approximation MDP to the original continuous-space MDP. It states that if  $\mathcal{T}$ is weakly continuous in $(x,u)$ and $c \colon \mathbb{X} \times \mathbb{U} \to \mathbb{R}_+$ is continuous and bounded, then for any compact $K \subset \mathbb{X}$ we have

\begin{equation*}
    \sup_{x_0 \in K}{\lvert \hat{J}(x_0) - J^*(x_0) \rvert} \to 0, \
    \sup_{x_0 \in K}{\lvert J(x_0, \hat{\gamma}) - J^*(x_0)\rvert} \to 0
\end{equation*}

\noindent as $\max_{i=1, ..., M-1}{\sup_{x, x' \in B_i}{\lVert x - x'\rVert}} \eqqcolon \lVert L^- \rVert_\infty \to 0$, where $\hat{\gamma}$ denotes the optimal policy of the finite-state approximate model (extended to $\mathbb{X}$ via the quantization $q$). By assumption, $\mathbb{X}$ is a Borel subset of a Euclidean space, hence it is $\sigma$-compact. Thus, for each $M$, there exists a partition $\{ B_i \}_{i=1}^M$ of $\mathbb{X}$ such that $\lVert L^- \rVert_\infty \to 0$ and $\cup_{i=1}^{M-1}{B_i} \nearrow \mathbb{X}$ as $M \to \infty$.

\subsection{Single-Agent Quantized Q-Learning}
\label{ssec:quantized_qlearning}

In view of the above results on robustness to quantization, we consider the following quantized Q-learning algorithm for continuous-space MDPs:
\begin{equation}
\label{eq:quantized_qlearning}
\begin{split}
    Q_{t+1}&(q(x), u) = \left( 1 - \alpha_t(q(x), u)\right) Q_t(q(x), u) \\
    &+ \alpha_t(q(x), u) \left( c(x, u) + \beta \min_{v \in \mathbb{U}}{Q_t(q(X_{t+1}), v)}\right)
\end{split}
\end{equation}

\noindent when $(X_t, U_t) = (x,u)$. Note that this is simply the standard Q-learning algorithm applied to the quantized states (i.e.: defined on the quantized state action pairs in $\mathbb{Y} \times \mathbb{U}$).

In \cite{karaQlearningMDPs2021}, the authors proved that under mild conditions on the transition kernel and cost function, this quantized version of the Q-learning algorithm converges to a limit which induces a near-optimal policy. In particular, the Q-learning iterations in \cref{eq:quantized_qlearning} converge to the optimal Q-function of the finite MDP model in \cref{eq:finite_mdp}. This holds if the transition kernel is weakly continuous in the state and action, the cost function is continuous and bounded, the state process is positive Harris recurrent, and $\alpha_t(y, u) = (1 + \sum_{k=1}^t{\mathbb{I}\{(y_k, u_k)=(y,u)\}})^{-1}$.  These results build on previous work in \cite{karaConvergenceFinite2021} on the convergence of finite-memory Q-learning for partially-observed Markov decision processes (POMDPs), by viewing quantization as a measurement kernel. 

\subsection{Decentralized Multi-Agent Q-Learning Algorithm for the Continuous-Space Setting}

We now propose an extension of the decentralized multi-agent Q-learning algorithm to the continuous state space setting. Our proposed algorithm closely follows the algorithm in \cite{arslanDecentralizedQlearning2017} but incorporates quantization as in \cite{karaQlearningMDPs2021}. The algorithmic description is given in \cref{alg:cts_dec_qlearning}.

Like the finite space variant, the algorithm operates on two timescales. On the finer timescale, each agent learns an estimate of the optimal value function for their current environment, composed of the underlying stochastic game and the policies of the other agents. On the coarser timescale, agents switch policies to an estimated best-reply based on their learned value function.

In the extension to the continuous space setting, each agent has a state quantizer, $q^i$ which maps the state space $\mathbb{X}$ to a finite set of representative states $\mathbb{Y}^i$. The agents' quantizers need not be the same. Hence, while all agents observe the same global state, the Q-learning algorithm they each run will in general see different quantizations of the state. Each agent's quantization induces a finite quantized \textit{baseline} policy set, $\hat{\Pi}_q^i \coloneqq \{ \hat{\pi}: \mathbb{Y}^i \to \mathbb{U}^i \}$. When choosing actions, agents follow a perturbation of their baseline policies: with probability $\rho^i$, player $i$ chooses its action uniformly at random. We denote the perturbed policy set by $\hat{\Pi}_{q, \rho}^i = \{  (1-\rho^i) \hat{\pi} + \rho^i \text{Uniform}(\mathbb{U}^i): \hat{\pi} \in \hat{\Pi}_q^i\}$. 

Each environment that an agent can see corresponds to some \textit{perturbed} quantized joint policy in the set $\boldsymbol{\hat{\Pi}}_{q, \rho} \coloneqq \times_{i \in \mathcal{N}}{\hat{\Pi}_{q, \rho}^i}$. The \textit{baseline} quantized policy set of agent $i$, $\hat{\Pi}_q^i$, is the set of policies that the agent considers when determining a response to the environment they see. It will be important to distinguish between these sets in the statement of our theoretical results.

During each exploration phase, the agents use some perturbed quantized policy in their policy space. This induces a stationary MDP environment from the perspective of each agent. Each agent runs the quantized Q-learning algorithm described in \cref{ssec:quantized_qlearning} on the MDP it sees. We will show that the quantized Q-learning algorithm converges to the optimal Q-function of the finite approximation MDP defined in \cref{ssec:quantization_state_action}, $\hat{Q}^{*i}_{\boldsymbol{\pi}_\rho^-i}$. Hence, for each `environment' $\boldsymbol{\hat{\pi}}_\rho^{-i} \in \boldsymbol{\hat{\Pi}}_{q, \rho}^{-i}$, $\hat{Q}^{*i}_{\boldsymbol{\pi}_\rho^-i}$ induces a best-reply for agent $i$ inside $\hat{\Pi}_q^i$, which is estimated using the learned $Q_t^i$. At the end of each exploration phase, each agent transitions to an estimated best-reply.

\begin{algorithm}[!htbp]
    \caption{Continuous-Space Decentralized Q-Learning Algorithm (for agent $i$)}
    \label{alg:cts_dec_qlearning}
    \setstretch{1.05}

        \textbf{Set Parameters} \\
            \codeindent $q^i$: agent $i$'s quantizer of the state space $\mathbb{X}$; $\mathbb{Y}^i \coloneqq q^i(\mathbb{X})$
            
            \codeindent $\mathbb{Q}_q^i$: some compact subset of $\mathbb{R}^{|\mathbb{Y}^i|\times|\mathbb{U}^i|}$
            
            \codeindent $\{T_k\}_{k\geq0}$: sequence of exploration phase lengths in $\mathbb{Z}_{\geq 1}$
            
            \codeindent $\rho^i \in (0,1)$: experimentation probability 

            \codeindent $\delta^i \in (0, \infty)$: tolerance for suboptimality
            
        \textbf{Initialize} $\pi_0^i \in \hat{\Pi}_q^i$, $Q_0^i \in \mathbb{Q}_q^i$ (arbitrary)
        
        \textbf{Receive} $x_0$
        
        \textbf{Iterate} $k \geq 0$ \\
        \codeindent ($k^{th}$ exploration phase)
        
        \codeindent \textbf{Iterate} $t = 1, ..., T_k$
        
        \codeindent\codeindent Quantize state: $y^i_t = q^i(x_t)$
        
        \codeindent\codeindent Choose action:
            $u_t^i = \begin{cases}
            \pi_k^i(y_t) \quad &\text{w.p.} \, 1 - \rho^i\\
            \text{any } u^i\in \mathbb{U}^i \quad &\text{w.p. } \, \rho^i \\
            \end{cases}$

        \codeindent\codeindent \textbf{Receive} $c^i(x_t, \boldsymbol{u}_t)$

        \codeindent\codeindent \textbf{Receive} $x_{t+1} \sim \mathcal{T}\left( \cdot \;\middle|\; x_t, u_t^i, \boldsymbol{u}_t^{-i}\right)$,\\ 
        \codeindent\codeindent Quantize: $y_{t+1}^i = q^i(x_{t+1})$

        
        \codeindent\codeindent $\alpha_t^i(y_t^i, u_t^i) = \left(1 + \sum_{s=t_k}^{t}{\mathbb{I}\left\{(y_s^i, u_s^i) = (y_t^i, u_t^i)\right\}} \right)^{-1}$

        \begin{fleqn}[\widthof{\codeindent\codeindent}]
        \begin{equation*}
        \begin{aligned}
            &\begin{aligned}
                Q_{t+1}^i&(y_t^i, u_t^i) = \left(1 - \alpha_t^i(y_t^i, u_t^i)\right) Q_t^i(y_t^i, u_t^i) \\
                &+ \alpha_t^i(y_t^i, u_t^i) \left(c^i(x_t, \boldsymbol{u}_t) + \beta \min_{v^i \in \mathbb{U}^i}{Q_t^i(y_{t+1}^i, v^i)} \right)
            \end{aligned} \\
           &\begin{aligned} 
                Q_{t+1}^i(y^i, u^i) = Q_t^i(y^i, u^i), \quad \forall (y^i,u^i) \neq (y_t^i, u_t^i)
            \end{aligned}
        \end{aligned}
        \end{equation*}
        \end{fleqn}
        
        \codeindent \textbf{End}
        
        \begin{fleqn}[\widthof{\codeindent}]
            \begin{align*}
            \begin{split}
                \widehat{\text{BR}}_\delta^i(Q_t^i) = \left\{ \hat{\gamma} \in \hat{\Pi}_q^i \colon Q_t^i(y, \hat{\gamma}(y)) \leq \min_{u \in \mathbb{U}^i}{Q_t^i(y, u)} + \delta^i, \, \forall y \in \mathbb{Y}_q^i \right\}
            \end{split}
            \end{align*}
        \end{fleqn}

        \codeindent \textbf{If} $\pi_k^i \in \widehat{\text{BR}}_\delta^i(Q_t^i)$\\
        \codeindent\codeindent $\pi_{k+1}^i = \pi_{k}^i$

        \codeindent \textbf{Else} \\
        \codeindent\codeindent Choose $\pi_{k+1}^i \in \widehat{\text{BR}}_\delta^i(Q_t^i)$

        \codeindent \textbf{End}

        \codeindent Reset $Q_t^i$ to any $Q^i \in \mathbb{Q}_q^i$ (e.g.: $Q_t^i = 0$)
    
    \textbf{End}

\end{algorithm}

\begin{assumption}
    \label{ass:cts_decqlearning}
    \hphantom{~}
    
    \begin{enumerate}
        \item $\mathcal{T}$ is weakly continuous in $(x,\boldsymbol{u})$

        \item $c^i \colon \mathbb{X} \times \boldsymbol{U} \to \mathbb{R}_+$ is continuous and bounded for every $i$.

        \item Under every stationary joint policy $\boldsymbol{\gamma} \in \boldsymbol{\hat{\Pi}}_{q,\rho}$, the state process $\{X_t\}_t$ is positive Harris recurrent, and thus admits a unique invariant measure, $\mu_{\boldsymbol{\gamma}}^*$
        
        \item For each stationary joint policy $\boldsymbol{\gamma} \in \boldsymbol{\hat{\Pi}}_{q, \rho}$, let $\mathbb{P}^{\boldsymbol{\gamma}}$ denote the probability measure associated with $\boldsymbol{\gamma}$. For every agent $i \in \mathcal{N}$, every observation-action pair $(y^i, u^i) \in \mathbb{Y}^i \times \mathbb{U}^i$, we suppose that $(y^i, u^i)$ is visited infinitely often $\mathbb{P}^{\boldsymbol{\gamma}}$-almost surely.
    \end{enumerate}
\end{assumption}

\begin{theorem}
\label{thm:cts_transition_near_br}
\hphantom{~}
Suppose all players use Algorithm~\ref{alg:cts_dec_qlearning} to select their actions. For any $\epsilon > 0$, there exists $\tilde{T}$ such that $T_k \geq \tilde{T}$ implies
\begin{equation}
    \mathbb{P}\left[ \left\| Q_{T_k}^i - \hat{Q}^{*i}_{\boldsymbol{\pi}_{k, \rho}^{-i}} \right\|_\infty < \epsilon  \right] \geq 1 - \epsilon, \quad \forall k \geq 0,
\end{equation}

where $\boldsymbol{\pi}_k$ is the baseline joint policy during the $k^{\rm th}$ exploration phase and $\boldsymbol{\pi}_{k, \rho}$ is the perturbation of $\boldsymbol{\pi}_k$ that is used for action selection.

Furthermore, for any $\eta > 0$, there exists a fine-enough quantization $q^i$ such that the greedy policy with respect to $\hat{Q}^{*i}_{\boldsymbol{\pi}_\rho^{-i}}$ is $\eta$-optimal in the MDP environment where the other agents act according to $\boldsymbol{\pi}_\rho^{-i}$.
\end{theorem}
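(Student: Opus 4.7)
The plan is to decouple the theorem into its two assertions. The first is essentially a uniformization, over the finite collection of possible (perturbed, quantized) joint policies, of the almost-sure convergence of single-agent quantized Q-learning; the second follows from the quantization-approximation result of \cite{karaQlearningMDPs2021} applied to the MDP induced on agent $i$ by the frozen environment $\boldsymbol{\pi}_\rho^{-i}$.

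For the first assertion, I observe that within the $k$-th exploration phase the baseline joint policy $\boldsymbol{\pi}_k$ is held fixed, so from agent $i$'s viewpoint the environment is a stationary single-agent MDP with kernel
\[
\tilde{\mathcal{T}}^i(\,\cdot\, \mid x, u^i) \;=\; \sum_{\boldsymbol{u}^{-i}} \mathcal{T}(\,\cdot\, \mid x, u^i, \boldsymbol{u}^{-i})\, \boldsymbol{\pi}_{k,\rho}^{-i}(\boldsymbol{u}^{-i} \mid x)
\]
and an analogous induced cost $\tilde{c}^i$. The $Q$-update inside the phase is then exactly the quantized Q-learning recursion of \Cref{ssec:quantized_qlearning} applied to this MDP with quantizer $q^i$. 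Assumption~\ref{ass:cts_decqlearning}(3)--(4) supply the positive Harris recurrence and infinite-visitation conditions needed, so the Kara--Yüksel convergence theorem gives $Q_t^i \to \hat{Q}^{*i}_{\boldsymbol{\pi}_{k,\rho}^{-i}}$ almost surely as $t \to \infty$.

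To upgrade this to the stated \emph{uniform} bound in $k$, I would exploit that $\hat{\Pi}_q^i$ is finite and the perturbation level $\rho^i$ is fixed, so the set of possible environments $\boldsymbol{\hat{\Pi}}_{q,\rho}^{-i}$ is finite and independent of $k$. For each fixed environment, almost-sure convergence yields a finite $\tilde{T}(\boldsymbol{\pi}_\rho^{-i})$ beyond which the $\epsilon$-event holds with probability at least $1-\epsilon$ from any initial $Q \in \mathbb{Q}_q^i$ (a compact set, and reset at every phase); taking $\tilde{T} \coloneqq \max_{\boldsymbol{\pi}_\rho^{-i}} \tilde{T}(\boldsymbol{\pi}_\rho^{-i})$ gives the uniform threshold. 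The phase structure of Algorithm~\ref{alg:cts_dec_qlearning} -- a reset of $Q_t^i$ and a frozen $\boldsymbol{\pi}_{k,\rho}^{-i}$ inside each phase -- makes the bound apply to every $k \geq 0$. For the second assertion, let $\hat{\gamma}^{*i}$ denote the greedy policy with respect to $\hat{Q}^{*i}_{\boldsymbol{\pi}_\rho^{-i}}$ (extended to $\mathbb{X}$ via $q^i$). Applying Theorem~2.7 of \cite{karaQlearningMDPs2021}, as recalled in \Cref{ssec:quantization_state_action}, to agent $i$'s induced MDP yields
\[
\sup_{x_0 \in K}\,\bigl|\,J^i(x_0, \hat{\gamma}^{*i};\,\boldsymbol{\pi}_\rho^{-i}) - J^{*i}(x_0;\,\boldsymbol{\pi}_\rho^{-i})\,\bigr| \;\longrightarrow\; 0
\]
as $\|L^-\|_\infty \to 0$, so any sufficiently fine $q^i$ delivers $\eta$-optimality on a prescribed compact set.

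The main obstacle is verifying the hypotheses of Theorem~2.7 (weak continuity of the transition kernel and continuity of the stage cost) on the \emph{induced} MDP rather than on the raw stochastic game. Because $\boldsymbol{\pi}_\rho^{-i}$ depends on $x$ only through the fixed quantizations $\{q^j\}_{j\neq i}$, the induced kernel $\tilde{\mathcal{T}}^i$ and cost $\tilde{c}^i$ are piecewise continuous in $x$ with jumps at the boundaries of those cells, so Assumption~\ref{ass:cts_decqlearning}(1)--(2) do not automatically transfer. I would address this either by restricting to refinements $q^i$ that subdivide the partitions of $\{q^j\}_{j\neq i}$ -- on each such cell the other agents' policy is constant and $\tilde{\mathcal{T}}^i$ inherits weak continuity from $\mathcal{T}$ locally -- or by revisiting the Kara--Yüksel proof and arguing that under positive Harris recurrence the invariant measure $\mu^*_{\boldsymbol{\gamma}}$ assigns no mass to the (lower-dimensional) boundary set, so an ``almost-everywhere weak continuity'' is enough to push the approximation argument through. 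Either route leaves the core convergence-plus-approximation skeleton above intact.
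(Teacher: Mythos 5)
Your proposal follows essentially the same route as the paper's proof: freeze $\boldsymbol{\pi}_{k,\rho}^{-i}$ to obtain a stationary single-agent MDP with mixture kernel $\tilde{\mathcal{T}}^i$ and cost $\tilde{c}^i$, invoke the Kara--Y\"uksel quantized Q-learning convergence theorem for the first assertion, and their Theorem~2.7 for the $\eta$-optimality claim. Two remarks are worth making. First, your uniformization step (finiteness of $\boldsymbol{\hat{\Pi}}_{q,\rho}^{-i}$, compactness of $\mathbb{Q}_q^i$, and the per-phase reset, yielding a single $\tilde{T}$ valid for all $k$) is carried out more explicitly than in the paper, which stops at almost-sure convergence for a fixed environment and leaves the passage to the uniform finite-time high-probability statement implicit; your version is the more complete one. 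Second, and more importantly, the obstacle you flag about the induced MDP is genuine and is \emph{not} resolved in the paper: the paper asserts that weak continuity of $\mathcal{T}$ in $(x,\boldsymbol{u})$ transfers to $\tilde{\mathcal{T}}^i$ and proves continuity of $\tilde{c}^i$ by dominated convergence, but that argument implicitly treats the distribution of $\boldsymbol{u}^{-i}$ as fixed along the sequence $x_k \to x$, whereas in fact $\boldsymbol{u}^{-i} \sim \bar{\boldsymbol{\pi}}^{-i}(x_k)$ and the mixing weights are piecewise-constant functions of the other agents' quantizers $q^j(x_k)$, hence discontinuous at their bin boundaries. So your concern identifies a real gap in the published argument rather than a defect of your own proof, and the remedies you sketch (compatible refinements of the partitions, or an almost-everywhere continuity argument using that $\mu^*_{\boldsymbol{\gamma}}$ charges no mass to the boundary set) are the right kind of repair; to make the theorem airtight one of them would need to be carried out, or the continuity hypotheses would need to be imposed directly on the induced model.
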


\begin{proof}
Suppose each agent $i$ uses the baseline policy $\pi^i \in \hat{\Pi}^i$. Denote the perturbed policy that agent $i$ follows during the exploration phase by $\bar{\pi}^i \sim \rho^i \text{Unif}(\mathbb{U}^i) + (1 - \rho^i) \pi^i$, a mixture of the deterministic policy $\pi^i$ and i.i.d. uniformly random actions.

Consider agent $i$'s Q-learning updates. First, we show that agent $i$ sees a stationary MDP. At each time step, agent $i$ incurs the random cost $c^i(x, u^i, \boldsymbol{u}^{-i}), \boldsymbol{u}^{-i} \sim \bar{\boldsymbol{\pi}}^{-i}(x)$. Note that this is a random function of $x, u$ and that the random component is independent and identically distributed. This is equivalent to the MDP with deterministic cost function $\tilde{c}^i(x, u^i) \coloneqq \mathbb{E}_{\boldsymbol{u}^{-i} \sim \bar{\boldsymbol{\pi}}^{-i}(x)}\left[c(x, u^i, \boldsymbol{u}^{-i})\right]$. (As an aside, note that $\tilde{c}^i(x, u^i)$ converges to $c^i(x, u^i, \boldsymbol{\pi}^{-i}(x))$ as $\rho^j \to 0$ for $j \neq i$.)

Similarly, the distribution over next states seen by agent $i$ when at state $x$ and they take action $u^i$ is given by the mixture $\mathbb{E}_{\boldsymbol{u}^{-i} \sim \bar{\boldsymbol{\pi}}^{-i}(x)}\left[\mathcal{T}(\cdot | x, u^i, \boldsymbol{u}^{-i}\right]$. (Note similarly that this converges weakly to the transition kernel $\mathcal{T}(\cdot | x, u^i, \boldsymbol{\pi}^{-i}(x))$ as $\rho^j \to 0$ for all $j \neq i$.)

Thus, agent $i$ sees a stationary MDP (with a continuous state space), which we denote by $M_{\boldsymbol{\bar{\pi}}}^i$. Furthermore, since $\mathcal{T}(\cdot| x, \boldsymbol{u})$ is weakly continuous by assumption, the transition kernel of $M_{\boldsymbol{\bar{\pi}}}^i$ is weakly continuous in $x, u^i$. Also, since $c^i$ is continuous and bounded in $\mathbb{X}\times \mathbb{U}$, $\tilde{c}$ is continuous and bounded in $\mathbb{X}\times\mathbb{U}^i$ (boundedness is immediate, and continuity follows from the dominated convergence theorem: $\lim_{k \to \infty}\tilde{c}(x_k, u_k^i) \coloneqq \lim_{k \to \infty}\mathbb{E}\left[c(x_k, u_k^i, \boldsymbol{u}^{-i})\right] = \mathbb{E}\left[\lim_{k \to \infty} c(x_k, u_k^i, \boldsymbol{u}^{-i})\right] = \tilde{c}(x, u^i)$)

Invoking \cite[Theorem 3.2, Corollary 3.3]{karaQlearningMDPs2021}, we have that the quantized Q-learning algorithm for agent $i$ converges almost surely to $\hat{Q}^{*i}_{\boldsymbol{\bar{\pi}}^{-i}}$. Furthermore, any greedy policy with respect to $\hat{Q}^{*i}_{\boldsymbol{\bar{\pi}}^{-i}}$ is nearly optimal. More precisely, if $\hat{\gamma}$ is such that $\hat{Q}^{*i}_{\boldsymbol{\bar{\pi}}^{-i}}(x, \hat{\gamma}(x)) = \min_{u^i \in \mathbb{U}^i}{\hat{Q}^{*i}_{\boldsymbol{\bar{\pi}}^{-i}}(x, u^i)}$, then for any compact $K \subset \mathbb{X}$, we have $\sup_{x_0 \in K}{\lvert J(x_0, \hat{\gamma}) - J^*(x_0)\rvert} \to 0$ as $\lVert L^{-} \rVert \to \infty$ where $\lVert L^-\lVert_\infty \coloneqq \max_{j\in \{1, ..., M-1\}}{\sup_{x, x' \in B_j^i}{\lVert x - x' \rVert}}$ and $\{B_j^i\}_j$ are the quantization bins of agent $i$'s quantizer $q^i$.
\end{proof}

The Q-factors that each agent learns, and hence their policy updates, correspond to the environment they see during that exploration phase, which includes the (perturbed) policies that the other agents use. Thus, each agent is continually responding to their last-seen environment in a nearly-optimal way. This is a sensible goal for each agent to have, independently of all other agents. How close an agent's policy update is to optimality at the end of each exploration phase is determined by the `resolution' of their quantizer. We can refer to the limiting state-action value function $\hat{Q}^{*i}_{\pi^{-i}}$ as agent $i$'s `subjective' value function since it is induced by their subjective observations via their quantizer.

In some applications, the goal might be to guarantee that the algorithm globally converges to an equilibrium. As the analysis in \cref{sec:policy_updating_dynamics} shows, under certain further assumptions and adjustments to the algorithm, the algorithm converges to an equilibrium. 

\section{Policy-Updating Dynamics and Convergence to Equilibrium}
\label{sec:policy_updating_dynamics}


In the previous section, our theorem was concerned with the algorithm's stage-wise behavior from the perspective of each agent. In particular, the convergence of each agent's Q-learning process and the near-optimality of the policy updates that those limiting Q-values induce. However, one may also be interested in understanding the algorithm's global dynamics over the joint policy space, and in particular, knowing whether the algorithm converges to some type of equilibrium.

In this section we derive a characterization of our algorithms' policy updating dynamics as a Markov chain over the joint policy space (when the exploration phases are long enough so that Q-learning converges). This characterization will apply to a general class of best-reply-based algorithms including the original finite-space decentralized Q-learning algorithm in \cite{arslanDecentralizedQlearning2017} and the continuous-space algorithm (\cref{alg:cts_dec_qlearning}). Furthermore, under certain additional assumptions, we derive a closed-form expression for the probability of convergence to each equilibrium. 


In what follows, we characterize the global policy-updating dynamics of the algorithms as a Markov chain, derive a closed-form expression for the probabilities  of convergence to each equilibrium, and close by reflecting on best-reply-based MARL algorithms when the goal, in the case of the cooperative team setting, is achieving global optimality.

\subsection{Global Policy-Updating Dynamics Modeled as a Markov Chain}

The approach we take is to first analyze an idealized update process, then show that when the exploration phases are sufficiently long, the results derived for the idealized process hold with arbitrarily high probability and precision in the true stochastic algorithms.

Consider a general class of algorithms following the idealized policy updates in \cref{alg:idealized_updating_dynamics}. When an agent is best-responding, it does not switch its policy. When not best-responding, the agent randomly chooses a policy to switch to according to a distribution $\psi$ which can depend on the best-reply set. In the finite-space algorithm from \cite{arslanDecentralizedQlearning2017}, the agent stays at its current policy with probability $\lambda^i$ (`inertia' parameter) and uniformly switches to a policy in its best-reply set otherwise. In \cref{alg:cts_dec_qlearning}, the agent always switches to a best-response (since their goal is to be individually best-responding rather than to converge to equilibrium necessarily). We consider $\psi$ to be general in this analysis. 


Note that in this section we will use $\text{BR}$ to denote limiting best-reply sets and $\boldsymbol{\Pi}$ to denote the joint policy space (those quantities will depend on the agents' quantizers in the case of the quantized continuous-space algorithm).

\begin{algorithm}[H]

    \caption{Generalized Idealized Policy-Updating Process (for agent $i$)}
    \label{alg:idealized_updating_dynamics}
        \textbf{Set Parameters}\\
        \codeindent $\psi^i(\cdot; \cdot)$ some prob. dist. over $\Pi^i$ given a best-reply set
        
        \textbf{Initialize} $\pi_0^i \in \Pi^i$ (arbitrarily)
        
        \textbf{Iterate} $k \geq 0$
        
        \codeindent \textbf{If} $\pi_k^i \in \text{BR}^i(\boldsymbol{\pi}^{-i})$ \\
        \codeindent\codeindent $\pi_{k+1}^i = \pi_k^i$
        
        \codeindent \textbf{Else} \\
        \codeindent\codeindent $\pi_{k+1}^i = \gamma^i \in \Pi^i \qquad \text{w.p. } \psi^i(\gamma^i; \text{BR}^i(\boldsymbol{\pi}^{-i}))$
                                 
        \codeindent \textbf{End}
\end{algorithm}

We note that this is a stationary Markov chain in the space of joint policies, $\boldsymbol{\Pi}$. That is, when the agents are at joint policy $\boldsymbol{\pi_k}$, their transition to the next joint policy $\boldsymbol{\pi}_{k+1}$ is independent of all preceding policies, and is given by the dynamics in \cref{alg:idealized_updating_dynamics}. Furthermore, each agent updates its policy $\pi^i$ in response to $\boldsymbol{\pi}^{-i}$ independently of the other agents' updates.  We next derive the transition kernel of this Markov chain. To make the dependence explicit, we denote this transition matrix by $P(\boldsymbol{\psi}; \text{BR}_G)$, where $\text{BR}_G$ is the best-reply graph induced by the limiting Q functions. 

For ease of computation, we first obtain an expression for the probability of transitioning from a given joint policy to a joint policy matching at agent $i$:
\begin{equation}
\label{eq:agent_tr_prob}
\begin{split}
    P_{\boldsymbol{\pi}, \pi'^i}^i &\coloneqq \mathbb{P} \left[ \pi_{k+1}^i = \pi'^i | \boldsymbol{\pi}_k = \boldsymbol{\pi} \right] \\
    &= \begin{cases}
    1 & \text{if }  \pi^i \in \text{BR}^i(\boldsymbol{\pi}^{-i}), \, \pi'^i= \pi^i, \\
    \psi^i(\pi'^i; \text{BR}^i(\boldsymbol{\pi}^{-i})), &\text{otherwise}
    \end{cases}
\end{split}
\end{equation}

Now, for any pair of joint policies $(\boldsymbol{\pi}, \boldsymbol{\pi}')$, we get that the probability of transitioning from $\boldsymbol{\pi}$ to $\boldsymbol{\pi}'$ is:

\begin{equation}
    \label{eq:joint_tr_prob}
    P_{\boldsymbol{\pi}, \boldsymbol{\pi}'} \coloneqq \mathbb{P} \left[ \boldsymbol{\pi}_{k+1} = \boldsymbol{\pi}' | \boldsymbol{\pi}_k = \boldsymbol{\pi} \right] = \times_{i=1}^{N} {P_{\boldsymbol{\pi}, \pi'^i}^i}
\end{equation}

\subsection{Convergence to Equilibrium: Closed-form Probabilistic Characterization}

Since our goal in this section is to characterize convergence to equilibrium, we require that $P(\boldsymbol{\psi}; \text{BR}_G)$ is the transition matrix of an absorbing Markov chain. In \cite{arslanDecentralizedQlearning2017}, the authors provide a sufficient condition on the class of stochastic games which guarantees convergence to an equilibrium. This condition is termed `weak acyclicity' and it assumes that there exists a sequence of policies from any joint policy to some equilibrium policy in which any two consecutive joint policies differ in only one agent and that agent's new policy is a best-response. In view of the characterization above, this condition clearly implies the constructed Markov chain is absorbing. However, this is not a necessary condition since, depending on $\boldsymbol{\psi}$, there exists other finite-paths to equilibrium policies. For a fixed game with a particular best-reply graph, choosing $\boldsymbol{\psi}$ which gives positive probability to each policy in $\Pi^i$ induces an absorbing Markov chain for the maximal set of stochastic games. For example, $\boldsymbol{\psi}$ could be such that the agent transitions to a best-reply w.p. $1 - \epsilon$ but explores their policy space uniformly w.p. $\epsilon$. For the following results, we assume that $\psi$ and $\text{BR}_G$ are such that $P(\boldsymbol{\psi}; \text{BR}_G)$ is absorbing. This assumption is in the same spirit as the assumption of weak acyclicity in \cite{arslanDecentralizedQlearning2017}, but is weaker and more general. In the next section, we will consider the special case of \cref{alg:cts_dec_qlearning}. Using standard results on absorbing Markov chains, we can derive a closed-form expression for the probability of convergence to any equilibrium in the stochastic game. Under \cref{alg:idealized_updating_dynamics}, an equilibrium corresponds to an absorbing state in Markov chain terminology. From \cref{eq:agent_tr_prob}, we see that this occurs precisely when $\pi^i \in \text{BR}^i(\boldsymbol{\pi}^{-i})$ for all $i \in \{1, ..., N\}$. All other joint policies are ``transient states''. Let $t$ be the number of transient states and $r$ the number of absorbing states. Thus, if we index the joint policy space appropriately, we obtain a transition matrix in absorbing canonical form:

\begin{equation*}
    P = \begin{bmatrix}
        Q & R\\ 
        \mathbf{0} & I_r
        \end{bmatrix}
\end{equation*}

Where $Q$ is the matrix of transition probabilities between transient states, $R$ the matrix of transition probabilities from transient states to absorbing states, and $I_r$ is the identity matrix corresponding to the fact that $P_{\boldsymbol{\pi}, \boldsymbol{\pi}} = 1$ for an absorbing state.

\begin{remark}
    In this section, what we mean by `equilibrium' is an equilibrium according to $\text{BR}_G$. If the limiting Q-values correspond to the true state-action value function, this is a true equilibrium; if they differ, then it is a `subjective' equilibrium.
\end{remark}

\begin{prop}
    \label{prop:abs_probs}
    Let a stochastic game $G$ be given, inducing a best-reply graph $\text{BR}_G$. Define $A(\boldsymbol{\psi}, \text{BR}_G)$ to be the vector giving the probability of converging to each joint policy in $\boldsymbol{\Pi}$. Suppose the transition matrix over $\boldsymbol{\Pi}$ is given in absorbing canonical form according to the derivation above. Then, the vector of convergence probabilities is given by
    \begin{equation}
        A(\boldsymbol{\psi}, \text{BR}_G) = A_0 
            \begin{bmatrix}
            \mathbf{0} & (I_t - Q)^{-1} R \\ 
            \mathbf{0} & I_r
            \end{bmatrix},
    \end{equation}
    
    \noindent where $A_0$ is a vector giving the initial distribution over the joint policy space (e.g.: uniform if each agent chooses their initially policy uniformly at random).
\end{prop}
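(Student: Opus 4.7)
The plan is to reduce this proposition to a well-known result in the theory of absorbing Markov chains, namely that the fundamental matrix $(I_t - Q)^{-1}$ encodes the expected number of visits to transient states before absorption, and that $(I_t - Q)^{-1} R$ gives the absorption probabilities. Since the proposition has already assumed (in the paragraph preceding the statement) that $P(\boldsymbol{\psi}; \text{BR}_G)$ is absorbing, the work consists in tracing through the $n$-step dynamics and taking a limit.

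First I would verify by induction on $n$ that the $n$-step transition matrix decomposes as
\begin{equation*}
P^n = \begin{bmatrix} Q^n & \left(\sum_{k=0}^{n-1} Q^k\right) R \\ \mathbf{0} & I_r \end{bmatrix}.
\end{equation*}
The base case $n = 1$ is immediate from the canonical form, and the inductive step is a direct block-matrix multiplication. Next, I would invoke the fact that in any absorbing Markov chain, starting from a transient state one is absorbed in finitely many steps almost surely; this forces the spectral radius of $Q$ to satisfy $\rho(Q) < 1$. Consequently $Q^n \to \mathbf{0}$ entrywise, and the partial sums $\sum_{k=0}^{n-1} Q^k$ converge to the Neumann series $(I_t - Q)^{-1}$.

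Combining these two facts, the limit of $P^n$ takes the form
\begin{equation*}
\lim_{n \to \infty} P^n = \begin{bmatrix} \mathbf{0} & (I_t - Q)^{-1} R \\ \mathbf{0} & I_r \end{bmatrix}.
\end{equation*}
Since the vector of convergence probabilities $A(\boldsymbol{\psi}, \text{BR}_G)$ is, by definition, the distribution over $\boldsymbol{\Pi}$ obtained in the limit from initial distribution $A_0$, it equals $\lim_{n \to \infty} A_0 P^n = A_0 \lim_{n \to \infty} P^n$, where the exchange of limit and left multiplication by $A_0$ is valid because $A_0$ has finitely many components and $P^n$ converges entrywise. This yields the claimed identity.

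The main step requiring care is justifying $\rho(Q) < 1$ from the assumption that $P(\boldsymbol{\psi}; \text{BR}_G)$ is absorbing, but this is a textbook consequence of the structure of absorbing chains on finitely many states; all other steps are bookkeeping with block matrices. The only modeling subtlety is to make sure that the convergence probabilities $A(\boldsymbol{\psi}, \text{BR}_G)$ are indeed defined as the limiting absorption distribution, so that identifying them with $A_0 \lim_{n \to \infty} P^n$ is tautological rather than substantive. Given these observations, the proposition is essentially a direct consequence of classical absorbing Markov chain theory applied to the transition kernel derived in \cref{eq:joint_tr_prob}.
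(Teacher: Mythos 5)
Your proposal is correct and follows essentially the same route as the paper: both reduce the claim to the classical absorbing-chain facts that $Q^n \to \mathbf{0}$, that $(I_t - Q)^{-1}R$ gives the absorption probabilities, and that transient states receive zero limiting mass, then left-multiply by $A_0$. The only difference is that the paper cites these facts from \cite{kemenyFiniteMarkov1983} (Theorems 3.1.1 and 3.2.7) while you supply their standard proofs via the block-matrix induction and the Neumann series, which is sound but not a substantively different argument.
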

\begin{proof}
    Let $(P_\infty)_{i,j}$ be the matrix giving the probability of converging to joint policy $j$ if starting at joint policy $i$. By \cite[Theorem 3.2.7]{kemenyFiniteMarkov1983}, for all transient initial states the probabilities are given by $(I_t - Q)^{-1} R$. For all absorbing initial states, we remain in the same state, which gives probabilities $I_r$. Finally, by \cite[Theorem 3.1.1]{kemenyFiniteMarkov1983}, the process cannot converge to a transient state, so those probabilities are $0$. Thus,
    
    \begin{equation*}
        P_\infty = \begin{bmatrix}
                    \mathbf{0} & (I_t - Q)^{-1} R\\ 
                    \mathbf{0} & I_r
                    \end{bmatrix}.
    \end{equation*}
    
    Finally, to get the absorption probabilities conditioned on the initial distribution over joint policies, we multiply by $A_0$.
\end{proof}

Thus, for the idealized update process of this general class of best-reply-based decentralized algorithms, we derived an explicit expression of the probability of converging to each equilibrium in $\boldsymbol{\Pi}$ (as a function of the algorithm parameters $\{\psi^i\}_{i \in \mathcal{N}}$ and the game's best-reply graph $\text{BR}_G$). Note that in addition to the probability of convergence to each equilibrium, absorbing Markov chain theory also tells us the expected number of policy updates required until convergence. In particular, the expected number of steps until convergence when starting at non-equilibrium policy $i$ is $\left((I_t - Q)^{-1} \boldsymbol{1}\right)_i$.

As noted, the finite-space and continuous-space decentralized Q-learning algorithms are stochastic approximations of this idealized process, since the $Q_t^i$ factors used to estimate their best-reply sets are noisy estimates of the limiting Q-values. Next, we show that for any algorithm approximating \cref{alg:idealized_updating_dynamics} with Q-iterations which converge, our results hold approximately.


Consider a stochastic algorithm following \cref{alg:idealized_updating_dynamics} with a particular form for $\{\psi^i\}_{i\in \mathcal{N}}$ but where the best-reply sets of each agent are estimated with learned Q-values. Let a stochastic dynamic game be given which induces a transition kernel $P(\boldsymbol{\psi}; \text{BR}_G)$ for the idealized policy-updating algorithm corresponding to an absorbing Markov chain in the joint policy space. Denote by $P_k$ the stochastic realization of the transition kernel at exploration phase $k$ for the algorithm in consideration. Note that $P_k$ is stochastic and determined by the approximate best-reply sets estimated by the learned Q-factors $Q_{t}^i$ at the end of each exploration phase. Denote by $P_{0:K}$ the matrix of terminating probabilities of the stochastic algorithm after $K$ exploration phases. That is, $(P_{0:K})_{i,j}$ is the probability the algorithm is at joint policy $j$ after $K$ exploration phases given that the initial joint policy was $i$, $P_{0:K} = \times_{k=0}^{K}{P_k}$. $P_{0:K}$ is also stochastic and depends on the convergence of Q-learning for the first $K$ exploration phases.

\begin{prop}
    \label{thm:approx}
    Consider a stochastic algorithm as described above approximating \cref{alg:idealized_updating_dynamics} with corresponding $\boldsymbol{\psi} = \{\psi^i\}_{i \in \mathcal{N}}$. Let the best-reply sets induced by the limiting Q-values of the algorithm be given by $\text{BR}_G$. Recall that the corresponding idealized best-reply process has a transition kernel $P(\boldsymbol{\psi}; \text{BR}_G)$ and absorption matrix $P_\infty$ (as defined above). Suppose that $\boldsymbol{\psi}$ and $\text{BR}_G$ are such that $P(\boldsymbol{\psi}; \text{BR}_G)$ is absorbing. Suppose that the stochastic algorithm's Q-iterations converge at every long-enough exploration phase (for this, the assumptions in \cite[Theorem 1]{arslanDecentralizedQlearning2017} or \cref{thm:cts_transition_near_br} are sufficient for their respective algorithms). Then, for any $\epsilon > 0$, there exists $\bar{K}$, such that for any given $K > \bar{K},\ K < \infty$, there exists $\bar{T}_K$ such that if $T_k > \bar{T}_K$ for $k=0,...,K$, then
    \begin{equation*}
        \mathbb{P}\Big[ \left|P_{0:K} - P_\infty \right|_\infty < \epsilon \Big] \geq 1 - \epsilon
    \end{equation*}
\end{prop}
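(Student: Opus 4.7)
The plan is to combine classical convergence of absorbing Markov chain iterates with a phase-wise union bound that uses Theorem~\ref{thm:cts_transition_near_br} to force the stochastic transition to agree with the idealized one at every phase. Write $P \coloneqq P(\boldsymbol{\psi}; \text{BR}_G)$. Because $P$ is absorbing by hypothesis, standard theory for absorbing Markov chains (cf.\ \cite[Ch.~3]{kemenyFiniteMarkov1983}) gives $P^K \to P_\infty$ geometrically. First I would pick $\bar{K}$ so that $\|P^K - P_\infty\|_\infty < \epsilon/2$ for every $K > \bar{K}$; this handles the deterministic component of the error.

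Fix any $K > \bar{K}$. The heart of the argument is a ``lock-in'' step: if at phase $k$ every agent's learned Q-factor $Q_{T_k}^i$ is sufficiently close to its limit $\hat{Q}^{*i}_{\boldsymbol{\pi}_{k,\rho}^{-i}}$, then the estimated $\delta$-best-reply set $\widehat{\text{BR}}_\delta^i(Q_{T_k}^i)$ coincides exactly with the limiting set defining $\text{BR}_G^i$, and hence $P_k = P$. Since the quantized joint baseline policy space $\boldsymbol{\hat{\Pi}}_{q,\rho}$ is finite, define the worst-case tolerance gap
\[
\Delta \coloneqq \min_{\boldsymbol{\pi}, i, y, u} \left| \hat{Q}^{*i}_{\boldsymbol{\pi}^{-i}}(y, u) - \min_{v \in \mathbb{U}^i} \hat{Q}^{*i}_{\boldsymbol{\pi}^{-i}}(y, v) - \delta^i \right|,
\]
where the minimum ranges over all quantized joint policies, agents, and quantized state--action pairs. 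Under a mild non-degeneracy assumption (no limiting Q-value sits exactly on the $\delta$-threshold), $\Delta > 0$, and then $\|Q - \hat{Q}^{*i}_{\boldsymbol{\pi}^{-i}}\|_\infty < \Delta/2$ forces agreement of the $\delta$-best-reply sets.

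With this lock-in condition in hand, I would apply Theorem~\ref{thm:cts_transition_near_br} with accuracy $\eta \coloneqq \min\{\Delta/2,\, \epsilon/(2NK)\}$ to obtain a threshold $\bar{T}_K$ (depending on $\eta$, hence on $K$) such that, for every $(k, i)$ with $k < K$ and every $T_k > \bar{T}_K$, the event $\{\|Q_{T_k}^i - \hat{Q}^{*i}_{\boldsymbol{\pi}_{k,\rho}^{-i}}\|_\infty < \eta\}$ has probability at least $1 - \epsilon/(2NK)$; crucially, Theorem~\ref{thm:cts_transition_near_br} is uniform over $k$, so $\bar{T}_K$ does not depend on the random joint-policy sequence. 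Let $\mathcal{A}$ be the intersection of these $NK$ accuracy events; the union bound gives $\mathbb{P}(\mathcal{A}) \geq 1 - \epsilon/2$. On $\mathcal{A}$, lock-in yields $P_k = P$ for every $k < K$, so $P_{0:K} = \prod_{k=0}^{K-1} P_k = P^K$, and
\[
\|P_{0:K} - P_\infty\|_\infty = \|P^K - P_\infty\|_\infty < \epsilon/2 < \epsilon,
\]
so $\mathbb{P}[\|P_{0:K} - P_\infty\|_\infty < \epsilon] \geq \mathbb{P}(\mathcal{A}) \geq 1 - \epsilon$, as claimed.

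The main obstacle is the lock-in step. Because the idealized transition depends on the \emph{discrete} best-reply set, arbitrarily small perturbations in the learned Q-values can flip the set when a limiting Q-value sits exactly on the $\delta^i$-threshold; one therefore either invokes the non-degeneracy assumption ensuring $\Delta > 0$, or explicitly bounds the probability that some Q-estimate lies in the $\eta$-neighborhood of the threshold and absorbs this boundary mass into the overall $\epsilon$ budget. Everything else---the geometric decay $P^K \to P_\infty$ and a union bound over finitely many phases and agents---is routine.
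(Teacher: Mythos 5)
Your proposal is correct and follows essentially the same route as the paper's proof: both first choose $\bar{K}$ via the geometric convergence $P^K \to P_\infty$ of the absorbing chain, then define a per-phase event on which the learned Q-factors are within a fixed separation margin of their limits so that the estimated best-reply sets coincide with $\text{BR}_G$ and hence $P_k = P$, and finally bound the probability of all $K$ such events jointly (the paper via \cite[Lemma 4]{arslanDecentralizedQlearning2017}, you via an explicit union bound over phases and agents) to conclude $P_{0:K}=P^K$ with probability at least $1-\epsilon$. Your explicit non-degeneracy gap $\Delta>0$ plays the same role as the paper's condition $\delta^i\in(0,\bar{\delta})$ in the event $E_k$, so this is a presentational rather than substantive difference.
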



\begin{proof}
    Since the idealized best-reply process is an absorbing Markov chain, $\lim_{k \to \infty}{P^k}$ converges. So, for any $\epsilon > 0$, there exists $\bar{K}$ such that
    \begin{equation}
        \label{eq:approx_P_inf}
        \left| P^K  - P_\infty \right|_\infty < \epsilon, \quad \text{for all } K > \bar{K}.
    \end{equation}
    
    Note that the event 
    \begin{equation*}
        E_k \coloneqq \left\{ \omega \colon \left| Q_{\pi_k^{-i}}^i - Q_{\bar{\pi}_k^{-i}}^i \right|_\infty \leq \frac{1}{2} \min\{\delta^i, \bar{\delta} - \delta^i\}, \, \forall i \right\}    
    \end{equation*}
    implies that $\Pi_k^i = \Pi_{\pi_k^{-i}}^i$ for all $i$. That is, the estimated best-reply set of each agent is the same as the true best-reply set. If the best-reply sets are equal at exploration phase $k$, then the transition kernel for the Q-learning algorithm is the same as that of the idealized best-reply process for that exploration phase ($P_k = P$).
    
    \cite[Lemma 4]{arslanDecentralizedQlearning2017} states that if $T_k$ is large enough, then $E_k$ can have probability arbitrarily close to one. Since $K$ is finite, by \cite[Lemma 4]{arslanDecentralizedQlearning2017}, for $\epsilon > 0$, there exists $\bar{T}_K$ such that if $T_k > \bar{T}_K$ for all $k = 0, ..., K$ and $\delta^i \in (0, \bar{\delta}), \rho^i \in (0, \bar{\rho})$, for all $i$, then
    \begin{equation}
        \mathbb{P}\left[ E_0, ..., E_K \right] \geq 1 - \epsilon.
    \end{equation}
    Note that $\bar{T}_K$ depends on the specific choice of the number of exploration phases $K$, and not just the required lower bound $\bar{K}$.
    
    Then, with $T_k > \bar{T}_K$ for all $k = 1, ..., K$, we have that 
    \begin{equation*}
    \begin{split}
        Pr[P_{0:K} = P^K] &\geq Pr[P_i = P \ \text{for all } i = 0, ..., K] \\
        &\geq Pr[E_0, ..., E_k]  \\
        &\geq 1 - \epsilon,
    \end{split}
    \end{equation*}
    
    \noindent since $E_0, ..., E_k \implies P_i = P \ \text{for all } i = 0, ..., K \implies P_{0:K} = P^K$.
    
    By \cref{eq:approx_P_inf}, we have that for $K > \bar{K}$, $\left| P^K  - P_\infty \right|_\infty < \epsilon$. This directly implies that 
    
    \begin{equation*}
        \mathbb{P}\Big[ \left|P_{0:K} - P_\infty \right|_\infty < \epsilon \Big] \geq 1 - \epsilon.
    \end{equation*}
\end{proof}

\begin{remark}
    Note that the finiteness of the number of exploration phases $K$ is crucial. Thus, the result is not uniform over $K > \bar{K}$. Without finiteness, the algorithm still converges to equilibrium, but the absorption probabilities may deviate from this closed-form expression.
\end{remark}

\begin{cor}
    \label{cor:approx_abs}
    In the above setup, suppose that the initial policies of each agent are chosen randomly according to $A_0$. Let $A^s_k$ be the vector giving the probability that the decentralized Q-learning algorithm is at each policy after $k$ exploration phases, $(A^s_k)_{i} \coloneqq \mathbb{P}[\boldsymbol{\pi}_k = i \, | \, \boldsymbol{\pi}_0 \sim A_0]$.
    
    Suppose the assumptions in \cref{prop:abs_probs} hold. Then for any $\epsilon > 0$, there exists $\bar{T}_K,\ \bar{K}$ such that if $T_k \geq \bar{T}_K,\ K \geq \bar{K}$, then:
    \begin{equation*}
        \mathbb{P}\Big[ \left| A^s_K - A(\boldsymbol{\psi}, \text{BR}_G) \right|_\infty < \epsilon \Big] \geq 1 - \epsilon
    \end{equation*}
    
    Where $A(\boldsymbol{\psi}, \text{BR}_G)$ is the vector of absorption probabilities for the idealized process (given random initialization according to $A_0$) as defined in \cref{prop:abs_probs}.
\end{cor}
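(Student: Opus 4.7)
The plan is to factor both $A^s_K$ and $A(\boldsymbol{\psi},\text{BR}_G)$ through the initial distribution $A_0$ and reduce the claim to the matrix-level approximation already proved in \cref{thm:approx}. First I would record the two identities: $A^s_K = A_0\,P_{0:K}$, since conditional on the Q-learning sample path the random transition kernels $\{P_k\}_{k=0}^{K}$ make the policy process a time-inhomogeneous Markov chain with composed transition matrix $P_{0:K}$; and $A(\boldsymbol{\psi},\text{BR}_G) = A_0\,P_\infty$, which is exactly the content of \cref{prop:abs_probs} (absorption probabilities are $A_0$ right-multiplied by the limit matrix $P_\infty$).

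Subtracting, $A^s_K - A(\boldsymbol{\psi},\text{BR}_G) = A_0\,(P_{0:K} - P_\infty)$. To pass from the entrywise maximum matrix norm controlled in \cref{thm:approx} to a bound on this vector, I would apply the standard inequality $\lvert vM \rvert_\infty \leq \lvert v \rvert_1 \cdot \lvert M \rvert_\infty$, which follows immediately from the triangle inequality applied entrywise. Since $A_0$ is a probability distribution on $\boldsymbol{\Pi}$, we have $\lvert A_0 \rvert_1 = 1$, and the inequality collapses to the non-expansive bound
\[
\lvert A^s_K - A(\boldsymbol{\psi},\text{BR}_G) \rvert_\infty \;\leq\; \lvert P_{0:K} - P_\infty \rvert_\infty.
\]

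To finish, I would invoke \cref{thm:approx}: for the given $\epsilon > 0$ it supplies a $\bar{K}$ and, for each finite $K > \bar{K}$, a phase-length threshold $\bar{T}_K$ such that $T_k > \bar{T}_K$ for $k=0,\ldots,K$ implies $\mathbb{P}\big[\lvert P_{0:K} - P_\infty \rvert_\infty < \epsilon\big] \geq 1 - \epsilon$. The displayed non-expansiveness then transfers this bound directly to $A^s_K$, yielding the corollary with the same $\bar{K},\bar{T}_K$.

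There is essentially no obstacle beyond a careful reading of the random-matrix setup in \cref{thm:approx}; the delicate Q-learning and best-reply-set concentration arguments have already been absorbed there. The only subtlety worth flagging in the write-up is that $A^s_K$ is itself random as a function of the Q-learning sample path (it is the conditional distribution of $\boldsymbol{\pi}_K$ given the learned Q-factors), which is precisely what makes the probabilistic statement non-trivial. The finiteness of $K$, inherited from \cref{thm:approx} and flagged in the remark there, remains essential and should be carried through explicitly.
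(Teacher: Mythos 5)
Your proposal is correct and follows essentially the same route as the paper: write $A^s_K = A_0 P_{0:K}$ and $A(\boldsymbol{\psi},\text{BR}_G) = A_0 P_\infty$, then reduce to \cref{thm:approx}. The only difference is that you make explicit the non-expansiveness step $\lvert A_0(P_{0:K}-P_\infty)\rvert_\infty \leq \lvert A_0\rvert_1 \cdot \lvert P_{0:K}-P_\infty\rvert_\infty$, which the paper leaves implicit; this is a welcome addition rather than a deviation.
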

\begin{proof}
    Recall that $A^s_K = U_{|\boldsymbol{\Pi}|}P_{0:K}$ and $A(\boldsymbol{\lambda}, \text{BR}_G) = U_{|\boldsymbol{\Pi}|} P_\infty$, as per \cref{prop:abs_probs}. 
    
    Then, by \cref{thm:approx} for any $\epsilon > 0$, there exists $\bar{T}_K, \bar{K}$ such that $T_k \geq \bar{T}_K, K \geq \bar{K}$ implies $\mathbb{P}\Big[ \left| A^s_K - A(G, \boldsymbol{\lambda}) \right|_\infty < \epsilon \Big] \geq 1 - \epsilon$.
\end{proof}
\begin{remark}
    Note that there are two sources of stochasticity in the above analysis. The first is the random policy initialization, and the second is the convergence of Q-learning within the required tolerance at each exploration phase.
\end{remark}

The above results make two main assumptions: that the noisy estimates of the limiting Q-values converge almost surely, and that $P(\boldsymbol{\psi}; \text{BR}_G)$ is the transition kernel of an absorbing Markov chain. In the finite-space algorithm \cite{arslanDecentralizedQlearning2017} the first assumption is satisfied via the convergence of Q-learning for finite-MDPs, and the second assumption clearly holds for the class of weakly acyclic stochastic games which the authors consider. In the case of \cref{alg:cts_dec_qlearning} the first assumption follows from results on convergence of quantized Q-learning in continuous-space MDPs and is stated in \cref{thm:cts_transition_near_br}. What remains, as will be studied in the following subsection, is the question of whether the second assumption holds.  

{\bf Shortcomings of best-reply-based algorithms.} In view of the analysis above, one shortcoming of the best-reply-based policy-updating dynamics is that the policy updates and the exploration of the policy space is determined only by instantaneous information in the best-reply graph. That is, agents following these algorithms switch to policies which are best-responses to their current environment, with no explicit record of prior history. Algorithms using such policy updates are able to guarantee convergence to player-by-player equilibria (local optimality), but they are unable to distinguish between global and local optimality in the cooperative team setting. In fact, the closed-form probabilities of convergence that we derive above show that, in the worst case, convergence to global optimality can occur with arbitrarily low probability. That is, there exists team problems and choices of parameters where convergence to a globally optimal solution only occurs if the random initialization of policies happens to be a global optimum. This behavior is particularly undesirable in team settings where the goal is finding a team-optimal joint policy. It can be shown that, in general, a player-by-player optimal solution can be arbitrarily worse than a team-optimal \cite{yongacogluDecentralizedLearning2022}. 


\section{Convergence to Equilibrium} \label{sec:convergence_to_eq}

As noted earlier, whether such MARL algorithms converge to a notion of equilibrium depends on the subjective state-value functions seen by each agent at each possible environment. We first answer the following question: if \cref{alg:cts_dec_qlearning} (perhaps with $\{\psi^i\}$ adjusted), what does it converge to? That is, what is a `subjective equilibrium' according to this algorithm?

\begin{prop}
    If the algorithm converges to a subjective equilibrium $\boldsymbol{\pi}$ for some game (i.e.: agents stop updating their policies), then the joint policy it converges to is an $\epsilon$-equilibrium in $\boldsymbol{\Pi}$, where $\epsilon \to 0$ as the quantization gets finer.
\end{prop}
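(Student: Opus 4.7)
The plan is to chain together the two conclusions of \cref{thm:cts_transition_near_br} with an argument that relates near-optimality in the $\rho$-perturbed game to near-optimality in the unperturbed game. First I would unpack what ``converges to a subjective equilibrium'' means: there exists $k^{*}$ such that $\pi_{k+1}^{i} = \pi_{k}^{i}$ for every $i$ and every $k \geq k^{*}$, which by the update rule in \cref{alg:cts_dec_qlearning} forces $\pi^{i} \in \widehat{\text{BR}}_{\delta}^{i}(Q_{T_k}^{i})$ for all such $k$. The first part of \cref{thm:cts_transition_near_br} then gives that $Q_{T_k}^{i}$ is $\epsilon_{1}$-close to $\hat{Q}^{*i}_{\boldsymbol{\pi}^{-i}_{\rho}}$ with arbitrarily high probability for large $T_k$, so on that event $\pi^{i}$ is a $(\delta^{i} + 2\epsilon_{1})$-approximate greedy policy with respect to the true subjective Q-function $\hat{Q}^{*i}_{\boldsymbol{\pi}^{-i}_{\rho}}$ on the quantized class $\hat{\Pi}_{q}^{i}$.

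Next I would invoke the second conclusion of \cref{thm:cts_transition_near_br}: for any $\eta > 0$, a sufficiently fine quantization $q^{i}$ ensures that every greedy policy with respect to $\hat{Q}^{*i}_{\boldsymbol{\pi}^{-i}_{\rho}}$ is $\eta$-optimal against $\boldsymbol{\pi}^{-i}_{\rho}$ in the full policy space $\Pi^{i}$, not merely in the quantized class---this is the key step that bridges the quantized best-reply to a best-reply in $\boldsymbol{\Pi}$. Combining with the previous step yields, on a compact set $K \subset \mathbb{X}$ of initial states,
\begin{equation*}
    \sup_{x \in K} \Bigl( J_{x}^{i}(\pi^{i}, \boldsymbol{\pi}^{-i}_{\rho}) - \inf_{\sigma^{i} \in \Pi^{i}} J_{x}^{i}(\sigma^{i}, \boldsymbol{\pi}^{-i}_{\rho}) \Bigr) \leq \eta + \delta^{i} + O(\epsilon_{1}),
\end{equation*}
so that the perturbed joint policy $\boldsymbol{\pi}_{\rho}$ is an approximate equilibrium of the stochastic game. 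To transfer this to the unperturbed joint policy $\boldsymbol{\pi}$, I would use the weak continuity of $\mathcal{T}$ and continuity/boundedness of $c^{i}$ (together with a dominated-convergence argument exactly like the one used for $\tilde{c}^{i}$ in the proof of \cref{thm:cts_transition_near_br}) to control $\sup_{\sigma^{i} \in \Pi^{i}} |J_{x}^{i}(\sigma^{i}, \boldsymbol{\pi}^{-i}_{\rho}) - J_{x}^{i}(\sigma^{i}, \boldsymbol{\pi}^{-i})|$ as $\rho \to 0$, which converts the $\epsilon$-equilibrium property from $\boldsymbol{\pi}_{\rho}$ into one for $\boldsymbol{\pi}$ itself.

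The hard part will be obtaining a \emph{uniform} bound on that last quantity over the deviating policy $\sigma^{i}$. Pointwise continuity in $\rho$ is essentially already in the proof of \cref{thm:cts_transition_near_br}, but uniformity across all responses $\sigma^{i}$ requires exploiting the geometric decay from the discount factor $\beta^{i}$: one can bound per-stage perturbations of the expected cost and transition kernel by a quantity $\omega(\rho)$ (a modulus of continuity depending only on the primitives of the game, not on $\sigma^{i}$) and then propagate them through the Bellman recursion to conclude that the cumulative deviation is at most $\omega(\rho)/(1-\beta^{i})$, uniformly in $\sigma^{i}$. Once such a modulus is in hand, writing $\epsilon$ as an explicit function of $\rho$, $\delta^{i}$, and the quantization radius $\lVert L^{-} \rVert_{\infty}$---and then sending $\rho$ and the quantization radius to zero together---yields $\epsilon \to 0$ as the quantization gets finer, completing the argument.
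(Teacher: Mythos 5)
Your proposal follows the same core route as the paper's proof: at a subjective equilibrium each agent is (approximately) greedy with respect to its limiting quantized Q-function $\hat{Q}^{*i}_{\boldsymbol{\pi}^{-i}_{\rho}}$, and the second conclusion of \cref{thm:cts_transition_near_br} converts that greediness into $\eta(q^i)$-optimality against the observed environment, with $\eta(q^i)\to 0$ as the quantization is refined. The difference is one of thoroughness. The paper's proof is two sentences that identify ``best-responding with respect to the subjective value function'' with exact greediness and measure optimality against the \emph{perturbed} environment $\boldsymbol{\pi}^{-i}_{\rho}$, whereas you explicitly track (i) the tolerance $\delta^{i}$ in $\widehat{\text{BR}}^{i}_{\delta}$, (ii) the estimation error $\epsilon_{1}$ between $Q^{i}_{T_k}$ and $\hat{Q}^{*i}_{\boldsymbol{\pi}^{-i}_{\rho}}$ (which also makes the conclusion hold only with high probability, not surely), and (iii) the gap between near-optimality against $\boldsymbol{\pi}^{-i}_{\rho}$ and against the baseline $\boldsymbol{\pi}^{-i}$, which you close with a uniform $O(\rho)$ perturbation bound propagated through the Bellman recursion (this works cleanly here because the action sets are finite, so the perturbed kernel and cost are mixtures placing weight $(1-\rho)^{N-1}$ on the unperturbed ones). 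These are real issues the paper elides: your accounting shows the resulting $\epsilon$ has the form $\eta(q^{i}) + O(\delta^{i} + \epsilon_{1} + \rho)$, so it vanishes only if $\delta^{i}$, $\rho$, and the Q-learning error are sent to zero alongside the quantization radius --- a caveat absent from the statement as written. Your passage from ``$(\delta^{i}+2\epsilon_{1})$-greedy with respect to $\hat{Q}^{*i}$'' to approximate optimality of the induced value is asserted rather than proved, but it is the standard approximate-greedy bound with an extra factor $1/(1-\beta^{i})$ and is not a substantive gap.
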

\begin{proof}
    Suppose that for some stochastic game, the algorithm converges to a subjective equilibrium. Then each agent is best-responding with respect to their subjective value function, and hence by \cref{thm:cts_transition_near_br}, agent $i$ is within $\eta(q^i)$ of an optimal response to the environment, where $\eta(q^i) \to 0$ as $q^i$ gets finer (i.e.: $\lVert L^- \rVert \to 0$). Hence, $\boldsymbol{\pi}$ is a $\max_{i \in \mathcal{N}}{\eta(q^i)}$-equilibrium in $\boldsymbol{\Pi}$.
\end{proof}

Thus, if the algorithm converges, then the joint policy it has converged to is a near-equilibrium. The other side of the question of convergence is such: when will the algorithm converge? By our analysis in the previous section, this is equivalent to asking if there exists some path with positive probability in $P(\boldsymbol{\psi}; \text{BR}_G)$ which terminates at a subjective equilibrium for each starting joint policy. If we choose $\psi^i$ as some distribution which puts positive probability at every policy in $\hat{\Pi}^i$, then the set of paths with positive probability allowable by $\text{BR}_G$ is maximized. The condition is then met if, for example, the underlying true best-reply graph has such a path, every best-reply is unique, and the quantization is fine-enough. More generally, this is closely related to the `satisficing paths' property as defined in \cite{yongacogluSatisficingPaths2021}; we refer the reader to that reference for a treatment of how to establish such a property in certain classes of stochastic games.

\subsection{Games with Convergence Guarantees}

In the following, we discuss examples of games in which self-play of our algorithm is convergent. We begin by describing static stochastic teams with independent measurements, where the algorithm can be used without modification, and then we describe symmetric $N$-player stochastic games, for which a modified version of the algorithm drives policies to $\epsilon$-equilibrium. 

\subsection*{Decentralized stochastic team problems with independent measurements.} 

An important class of continuous-space stochastic games for which our algorithm is guaranteed to converge is static stochastic teams with independent measurements, viewed as repeated games where the state is independent and identically distributed (i.e.: the transition kernel is not a function of the current state or joint action). The independent measurements condition is a mild one, as one can, under absolute continuity conditions of measurement kernels, via a change of measures argument building on Witsenhausen \cite{wit88}, reduce a dynamic team problem to one which is static and with independent measurements. In this setting, the agents affect each other only via their cost functions since there are no state dynamics. Since this is a static team problem, the existence of `satisficing paths' to an equilibrium holds whenever an equilibrium exists (via a sequential update argument leading to a monotonically non-increasing expected cost sequence converging to a limit). \cite[Theorem 5.2]{YukselWitsenStandardArXiv} guarantees the existence of an optimal solution for such static problems when the (reduced) cost is continuous in the action variables and the action spaces are compact, as well as the existence of an $\epsilon$-optimal solution among quantized policies building on \cite{saldiyuksellinder2017finiteTeam}. Convergence of Q-learning updates to near local equilibria follows from the independence of measurements under static reduction.

\subsection*{Symmetric $N$-Player Games}

The algorithm presented in this paper prescribed switching one's policy to a (subjectively) greedy policy whenever the agent did not estimate itself to be best-replying. Under this prescription, guarantees of convergence to equilibrium depend on the existence of suitable paths, as described earlier.

Two natural modifications of the algorithm as follows: (1) replace the 0-best-reply condition in the policy update with an $\epsilon$-best-reply condition, so that the agent opts not to change policies whenever it is already (subjectively) $\epsilon$-best-replying to its environment; (2) when not $\epsilon$-best-replying to one's environment, replace the best-reply policy update with random search over the entire stationary randomized policy set. 

The modifications described above were proposed in \cite{yongacogluSatisficingPaths2021} for stochastic games with finitely many states, and it was shown that the algorithm presented there drives policies to $\epsilon$-equilibrium in \emph{symmetric} $N$-player stochastic games. The analysis there depended on the existence of so-called $\epsilon$-satisficing paths to $\epsilon$-equilibrium from any initial policy, and leveraged symmetry to explicitly establish the existence of such paths in symmetric games with finitely many states.

When generalizing the main algorithm of \cite{yongacogluSatisficingPaths2021} to symmetric games with general state spaces, the translation of convergence guarantees is not automatic: indeed, $\epsilon$-equilibrium need not exist when the state space is general \cite{levy2013discounted,levy2015corrigendum}. However, when an $\epsilon$-equilibrium, or even a subjective $\epsilon$-equilibrium, can be guaranteed to exist in a general symmetric game, the results of \cite{yongacogluSatisficingPaths2021} can be invoked, and that algorithm will drive play to $\epsilon$-equilibrium, provided all agents employ the same quantizer. We leave this extension as an interesting line for future research.

\section{Simulation Study}
\label{sec:simulation_study}

We now briefly present simulation results of \cref{alg:cts_dec_qlearning} on a simple 2-player stochastic team. Consider the stochastic team where $\mathbb{X} = [0,1],\ \mathbb{U}^1 = \mathbb{U}^2 = \{-1, 1\},\ c^1(x,\boldsymbol{u}) = c^2(x,\boldsymbol{u}) = -x$, and the state dynamics are given by $x_{t+1} = \left[x_t + 0.1 u_1 u_2 + 0.1 \xi \nu\right]_0^1$, where $\nu \sim \text{Uniform}(-1, 1)$, $\xi \sim \text{Bernoulli}(0.1)$, and $[\cdot]_0^1$ denotes clipping to $[0,1]$. Thus, states with lower values have higher costs, and the team would want to head towards $1$. The transition kernel is such that the state jumps forward by $0.1$ if the agents cooperate (play the same action) and backwards by $0.1$ otherwise, with some uniform random noise being added with probability $0.1$. Each agent uses a uniform random quantizer onto the set $\mathbb{Y}^1 = \mathbb{Y}^2 = \{0, 0.25, 0.5, 0.75, 1.0\} = \mathbb{Y}$, via $q^1(x) = q^2(x) = \argmin_{y \in \mathbb{Y}}{|y - x|}$.

We ran a version of \cref{alg:cts_dec_qlearning} with the modification that the agents choose not to update their policy (`inertia') with probability $0.25$ and $0.75$, respectively. We ran 50 trials for each exploration phase length $T$ in $\{10^2, 10^3, 10^4, 10^5\}$. For each trial we kept track of the proportion of time that a team-optimal policy was used across the first 10 exploration phases (i.e. a joint policy in which the agents cooperate at every state value). For each trial, the initial joint policy is chosen such that the agents are anti-cooperating at every state, and the initial state is chosen uniformly at random in $[0,1]$, The values are averaged across trials and shown in \Cref{fig:sim_results}. Code for implementing \cref{alg:cts_dec_qlearning} and reproducing the simulation results can be found at \url{https://github.com/Awni00/decentralized-MARL-general-cts-spaces}.

 
 \begin{figure}
     \centering
     \includegraphics[width=0.5\textwidth]{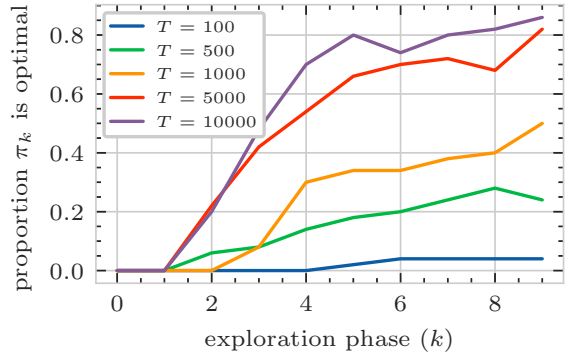}
     \caption{Simulation results: proportion of 50 trials where the policy at the $k$th exploration phase was optimal}
     \label{fig:sim_results}
 \end{figure}
 
\section{Conclusion}
\label{sec:conclusion}
In this paper, we proposed a decentralized multi-agent reinforcement learning algorithm for stochastic games with continuous spaces. We showed that as the resolution of each agent's quantizer increases, their policy-updates are asymptotically optimal responses to their most-recently observed environment. We then analyzed the global policy-updating dynamics of the general class of best-reply-based algorithms and derived a closed-form expression for the probabilities of convergence to each equilibrium. We used this to analyze the convergence of the continuous-space algorithm to equilibrium. 

\bibliography{references, SerdardBibliography,more_references}

\end{document}